\DeclareMathOperator{\supp}{supp}
\DeclareMathOperator{\KL}{KL}
\newcommand{\indicator}{\mathds{1}}
\newcommand{\parhead}[1]{\textbf{#1}\quad}
\theoremstyle{plain}
\newtheorem{theorem}{Theorem}[section]
\newtheorem{proposition}[theorem]{Proposition}
\newtheorem{proposition_inf}[theorem]{Proposition (informal)}
\theoremstyle{definition}
\newtheorem{remark}[theorem]{Remark}
\theoremstyle{remark}
\newcommand\methodname{VIDS}
\title{Quantifying Uncertainty in the Presence of Distribution Shifts}
\author{%
  Yuli Slavutsky \\
  Department of Statistics\\
  Columbia University\\
  New York, NY 10027, USA \\
  \texttt{yuli.slavutsky@columbia.edu} \\
  \And
  David M.~Blei \\
  Departments of Statistics, Computer Science \\
  Columbia University\\
  New York, NY 10027, USA \\
  \texttt{david.blei@columbia.edu} \\
}
\begin{document}
\maketitle

\begin{abstract}
Neural networks make accurate predictions but often fail to provide reliable uncertainty estimates, especially under covariate distribution shifts between training and testing. To address this problem, we propose a Bayesian framework for uncertainty estimation that explicitly accounts for covariate shifts. While conventional approaches rely on fixed priors, the key idea of our method is an adaptive prior, conditioned on both training and new covariates. This prior naturally increases uncertainty for inputs that lie far from the training distribution in regions where predictive performance is likely to degrade. To efficiently approximate the resulting posterior predictive distribution, we employ amortized variational inference. Finally, we construct synthetic environments by drawing small bootstrap samples from the training data, simulating a range of plausible covariate shift using only the original dataset. We evaluate our method on both synthetic and real-world data. It yields substantially improved uncertainty estimates under distribution shifts.
\end{abstract}

\section{Introduction}
\label{sec:intro}

Neural networks are powerful predictive models, capable of capturing complex relationships from data \citep{flexible}. Despite this capability, they struggle to provide reliable measures of predictive uncertainty. This issue is particularly relevant when the distribution of covariates shifts between training and test data. Such shifts frequently occur in real-world settings, including high-stakes applications like medicine, where inaccurate uncertainty estimates can lead to harmful outcomes \citep{med_app_1, med_app_2}.

Consider a dataset $\{(x_i, y_i)\}_{i=1}^n$ and a new test point $x^*$. In a classical Bayesian neural network~\citep{neal2012bayesian}, the posterior predictive distribution is
\begin{align}
\label{eq:classical_predictive}
p(y^* \mid x^*, x_{1:n}, y_{1:n}) 
= \int p(y^* \mid x^*, \theta)\, p(\theta \mid x_{1:n}, y_{1:n})\, d\theta,
\end{align}
where $p(y \mid x^*, \theta)$ comes from the neural network with weights $\theta$ and $p(\theta \mid x_{1:n}, y_{1:n})$ is the posterior over those weights. This expression reveals that, in the classical model, predictive uncertainty arises entirely from uncertainty about model parameters $\theta$. But intuitively, if the new covariate vector $x^*$ lies far from the training covariates $x_{1:n}$, then we should become more uncertain about our prediction.

To capture this intuition, we propose a Bayesian approach that better reflects uncertainty due to covariate shifts. The central idea is to adapt the prior distribution to explicitly depend on covariates, i.e., replacing the classical prior $p(\theta)$ with $p(\theta \mid x_{1:n}, x^*)$. This leads to a posterior predictive distribution of the form
\begin{align}
\label{eq:adapted_predictive}
p(y^* \mid x^*, x_{1:n}, y_{1:n}) 
= \int p(y^* \mid x^*, \theta)\, p(\theta \mid x^*, x_{1:n}, y_{1:n})\, d\theta.
\end{align}
As we will discuss, conditioning the prior on the test covariates $x^*$ allows the posterior to adjust its uncertainty in accordance with the proximity of $x^*$  to the training distribution. Thus, the model captures the impact of covariate shift on predictive performance, and delivers increased uncertainty for inputs that lie far from the training data.

The intuition behind this prior is that predictions become more uncertain at covariates far from the training data because the learned relationship may no longer hold. Consider a logistic regression with two covariates, where one covariate varies substantially and the other hardly varies. The classical posterior of the coefficients can assign both low and high value to the less variable covariate since in the training data it is almost indistinguishable from the intercept.
But if the test data includes a previously unseen value of this covariate, our predictive uncertainty should increase. The test data point, even without its response, indicates that the coefficient could differ significantly from what the training data alone suggests. For an illustration see Figure \ref{fig:logistic}.

\begin{figure*}[t]
\vskip -0.2in
\includegraphics[width=1.0\columnwidth]{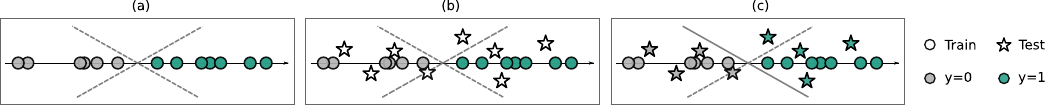}
\caption{(a) In training one covariate is fixed; the data lies on a one-dimensional subspace. All predictors intersecting the fixed axis at the same point are equivalent. (b) At test time, variation along the second dimension reveals that some predictors may fit better the new data, prompting a prior shift. (c) Possible labeling where only the solid line separates the test data.}
\vskip -0.2in
\label{fig:logistic}
\end{figure*} 

Implementing this idea presents three key challenges:

The first challenge is to specify the prior $p(\theta \mid x_{1:n}, x^*)$. We propose an energy-based prior that spreads its mass on the plausible values of the weights, given the covariates. 

The second challenge is to compute \Cref{eq:adapted_predictive}, which involves the posterior distribution $p(\theta \mid x^*, x_{1:n}, y_{1:n})$. Unlike a classical posterior distribution, this posterior is located in the context of a prediction about $x^*$, which comes into the adaptive prior. To approximate it, we use the idea of \textit{amortized variational inference} \citep{kingma2015variational,Margossian:2024b}. We learn a family of approximate posteriors that take test covariates $x^*$ as input and produce an approximate posterior tailored to its prediction.

A final challenge is that fitting our amortized variational family requires both training data from the training distribution and new data from a covariate-shifted distribution. In practice, however, we often only have one training set, without access to data drawn from a shifted distribution. We use small bootstrap samples to form synthetic environments~\citep{slavutsky2023class} and prove that they contain covariate-shifted distributions able to approximate unseen shifts. We adapt the variational objective to match all of these environments.

Together, these ideas form  Variational Inference under Distribution Shif (\methodname). On both real and synthetic data, we show that \methodname{} outperforms existing methods in terms of predictive accuracy, calibration of uncertainty, and robustness under covariate shifts. \methodname{} provides accurate estimates of posterior predictive uncertainty in the face of distribution shift.

\parhead{Related work.} \label{sec:related}
Forming predictions under covariate shift is important to many applications. Examples from the medical domain include imaging data from different hospitals \citep{hospitals4, hospitals1, hospitals2, hospitals3}, and failure to provide reliable predictions when applied to different cell types.
In image classification, cross-dataset generalization remains challenging \citep{images1}, including cases where shifts are introduced by variations in cameras \citep{images2}, or by temporal and geographic differences. Other examples include person re-identification, where the training data often includes biases with respect to to gender \citep{grother2019ongoing, klare2012face}, age \citep{best2017longitudinal, michalski2018impact, srinivas2019face}, and race \citep{raji2019actionable, wang2019racial}. All of these settings that can benefit benefit from the use of \methodname{}.

Approaches for uncertainty estimation in neural networks include ensemble-based methods \citep{lakshminarayanan2017simple, valdenegrodeep}, and methods based on noise addition \citep{dusenberry2020efficient, maddox2019simple, wen2020batchensemble}. A prominent line of work focuses on Bayesian neural networks \citep{tishby, denker}, which offer a principled framework for uncertainty quantification and have been widely adopted for this purpose \citep{ovadia}.
These approaches (e.g., \citep{ha2016hypernetworks, yoon2018bayesian}) typically place a single prior distribution over the model parameters that is shared across all inputs. In contrast, our method defines a prior that adapts to individual covariates.

Several techniques have been given Bayesian interpretations, including regularization methods such as dropout \citep{kingma2015variational, gal2016dropout}, stochastic gradient-based approximations \citep{welling2011bayesian, dubey2016variance, li2016preconditioned}, and variational inference methods that approximate the posterior distribution \citep{graves2011practical, neal2012bayesian, weight_uncertainty, louizos2016structured, malinin2018predictive}.

A prevalent strategy for handling uncertainty under distribution shift involves distance-aware methods, which estimate uncertainty based on the distance between new inputs and the training data. These include approaches that rely on training data density estimation \citep{sensoy2018evidential}, often implemented via kernel density methods \citep{ramalho2020density, van2020uncertainty} or Gaussian Processes (GPs) \citep{williams2006gaussian}. More recent examples include Spectral-normalized Neural Gaussian Processes (SNGP) \citep{liu2020simple}, which apply spectral normalization to stabilize network weights, and Deterministic Uncertainty Estimation (DUE) \citep{van2021feature}, which integrates a GP with a deep feature extractor trained to preserve distance information in the representation space.

Most relevant to our work are methods that combine Bayesian and distance-based approaches, such as \citet{park2024density}, which incorporate an energy-based criterion into the training objective to increase predictive uncertainty for inputs that are unlikely under the training distribution. 

The key idea behind the aforementioned distance-based approaches is to detect shifts by measuring how far test inputs lie from the training inputs, typically using fixed metrics in the input space. But these measures can be misleading because not all shifts affect predictive performance equally. Consider a univariate logistic regression. Suppose there are two types of shifts: (i) training data concentrate near the decision boundary, and (ii) training data appear only at extreme covariate values. At test time, covariates cover the entire range. Under shift (i), predictive performance remains strong because the critical region near the boundary was covered during training. Under shift (ii), performance deteriorates because the model never saw data near the decision boundary. 

The important factor is not simply the difference in covariates, but how shifts influence predictive accuracy.
Unlike predefined distance measures, the adaptive prior in VIDS essentially learns from data how covariate shifts affect predictive performance. It uses its adaptive prior to directly model the change in predictive uncertainty that is induced by newly observed covariates.

\section{Predictive uncertainty under distribution shifts}

We address uncertainty estimation under covariate shift, where the distribution of test-time inputs $x^*$ may differ from that of the training data $x_{1:N}$. To account for such shifts, we extend the classical Bayesian framework by \emph{treating covariates as random variables and modeling their dependence on the model parameters $\theta$}. 
This leads to a formulation in which the prior over $\theta$ is conditioned on the newly observed covariate $x^*$, resulting in a predictive posterior that explicitly reflects this dependence. Consequently, the predictive uncertainty, defined through this posterior, can adapt to reflect greater uncertainty for covariates $x^*$ that are unlikely under the training distribution, thereby capturing their potential impact on predictive performance.
We begin by revisiting the classical Bayesian model.

\subsection{Background on the classical Bayesian model}
     
In the classical Bayesian framework the parameters $\theta$ are
treated as a random variable drawn from a prior distribution $\theta\sim p(\theta)$,
while the covariates $x_{1:N}$ are considered fixed. 
Under the standard conditional independence assumption, each outcome
 $y_{i}$ is independent of all other pairs $(x_{j},y_{j})$ given $\theta$
and $x_{i}$. The posterior distribution over $\theta$ is 
\begin{equation}
    p(\theta\vert x_{1:N},y_{1:N})\propto p(\theta) \prod_{i=1}^{N}p(y_{i}\vert x_{i},\theta).
\end{equation}

New test points $x^{*}$ are likewise treated as fixed.  Thus, the introduction of a new test input $x^*$ does not alter the posterior, and consequently does not affect the predictive uncertainty. Again, see 
Equation \ref{eq:classical_predictive}.
\subsection{A Bayesian approach to covariate shift} \label{sec:assump}

Our goal in this work is to explicitly model distributional differences between the training covariates $x_1, \dots, x_N \sim p_x$ and a new covariate $x^* \sim p_{x^*}$. To effectively capture how such shifts impact predictive performance, the posterior predictive at $x^*$ should adapt to the observed change in the covariate distribution. To this end, we propose a model in which the model parameters $\theta$ depend on both the training and test covariates, $x_{1:N}$ and $x^*$. This 
dependence is illustrated in the probabilistic graphical model in Figure~\ref{fig:graph}, which indicates the structure of the posterior predictive for $x^*$.

As in the classical framework, our model assumes that $y^*$  is conditionally independent of all other variables given $x^*$ and $\theta$.
However, by allowing $\theta$ to depend on the covariates, the prior $p(\theta \vert x_{1:N}, x^*)$ can now \emph{adjust the plausibility of parameter values based both on training and test inputs, and thus capture distribution shifts}. 
As a result, the posterior $p(\theta \vert x_{1:N}, y_{1:N}, x^*)$ is explicitly dependent on $x^*$ through the covariate-dependent prior.

\begin{wrapfigure}{r}{0.45\textwidth}
  \centering
  \resizebox{!}{3.2cm}{ 
  \begin{tikzpicture}[node distance=1.5cm, auto]

    \definecolor{darkred}{rgb}{0.7,0,0}
    \definecolor{nodegray}{rgb}{0.85,0.85,0.85}

    \tikzstyle{circlegray} = [draw, circle, fill=nodegray, minimum size=0.9cm, inner sep=0pt]
    \tikzstyle{circlewhite} = [draw, circle, minimum size=0.9cm, inner sep=0pt]

    \node[circlegray] (x) at (0, 2) {$x$};
    \node[circlegray] (xstar) at (4, 2) {$x^*$};
    \node[circlewhite] (theta) at (2, 0.8) {$\theta$};
    \node[circlegray] (y) at (0, 0) {$y$};
    \node[circlewhite] (ystar) at (4, 0) {$y^*$};

    \draw[->, thick] (x) -- (y);
    \draw[->, thick] (xstar) -- (ystar);
    \draw[->, thick] (theta) -- (y);
    \draw[->, thick] (theta) -- (ystar);

    \draw[->, ultra thick, darkred] (x) -- (theta);
    \draw[->, ultra thick, darkred] (xstar) -- (theta);

    \node[
  draw,
  thick,
  rounded corners,
  fit={(x) (y)},
  inner sep=10pt,
  label={[anchor=south east,xshift=-0.01pt,yshift=0.01pt,font=\scriptsize]south east:$1\!:\!N$}
] {};

  \end{tikzpicture}}
  \caption{Graphical model. Thick red arrows denote additional dependencies introduced by our model. Observed variables shown in gray. \vspace{-2em}}
  \label{fig:graph}
\end{wrapfigure}
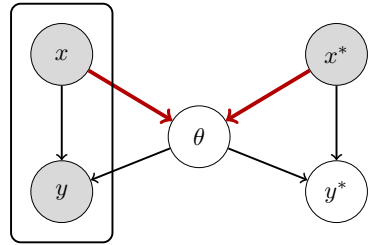

Let $f_\theta$ denote the predictive model parametrized by $\theta$.
The predictive uncertainty for $x^*$ in our model is defined through its predictive posterior in Equation \ref{eq:adapted_predictive}, which is our primary quantity of interest. 
Our \textbf{goal} is to approximate this predictive uncertainty. Since the likelihood $p(y^* \mid x^*, \theta)$ can be evaluated directly by the predictive model $f_\theta(x^*)$, the central challenge lies in accurately estimating the covariate-dependent posterior $p(\theta \vert  x^*, x_{1:N}, y_{1:N})$.

In what follows, we first define a concrete adaptive prior $p(\theta \vert x^*, x_{1:N})$ that conditions on both the observed training covariates and the test-time covariate. We then develop a variational inference scheme to approximate the posterior under this adaptive prior. 
Finally, since test-time covariates $x^*$ from a shifted distribution are typically unavailable during training, we approximate this setting by constructing \emph{synthetic environments} designed to simulate diverse covariate distributions by subsampling the training data. Using these environments, we design an algorithm that approximates a posterior capable of anticipating predictive degradation under a range of potential test-time shifts.

\subsection{The adaptive prior}

Our prior aims to capture the plausibility of $\theta$  given both the training inputs $x_{1:N}$ and a new test covariate $x^*$. We propose an adaptive prior conditioned on both $x_{1:N}$ and $x^*$, defined by the following energy function
\begin{align}
 & \textstyle E(\theta;x_{1:N},x^{*})\coloneqq\int\sum_{i=1}^{N}\log p(y\vert x_{i},\theta)+\log p(y\vert x^{*},\theta)\;dy\\
 & \textstyle p(\theta\vert x_{1:N},x^{*})\coloneqq \frac{1}{Z(\theta)}  \exp \left(E(\theta;x_{1:N},x^{*}) \right),     \label{eq:prior} 
\end{align}
where $Z(\theta) \coloneqq \int \exp \left( E(\theta;x_{1:N},x^{*}) \right) d \theta$ is the normalizing factor.\footnote{This definition requires integrability of $\exp \left( E(\theta;x_{1:N},x^{*}) \right)$, and thus we assume that $\lim_{\left\lVert \theta \right\rVert \rightarrow \infty} E(\theta;x_{1:N},x^{*})  = -\infty$ with at least linear decay.}

This formulation allows for a smooth adaptation to test-time shifts. When only a small number of test covariates $x^{*}$ are introduced, or if the inputs are similar to the training data, the prior remains close to a distribution conditioned on the training covariates alone.
However, as more test inputs are observed, especially if they differ substantially from the training distribution, the prior adjusts more significantly to reflect the new covariate distribution.

We illustrate this adaptivity in the simple example described in \S \ref{sec:intro}.
We consider a logistic model for $y \in \{0,1\}$, where covariates $x$ consist of two features: one of them remains constant in all training examples, but in new test examples, both features vary. 
Figure \ref{fig:Boltzman} shows how the prior distribution changes when the shifted examples are introduced. See \Cref{sup:prior_example} for additional details and an illustration that lightly shifted examples lead to minimal adaptations.

\begin{figure*}[t] 
\vskip 0.2in
\begin{center}
\centerline{\includegraphics[width=0.9\columnwidth]{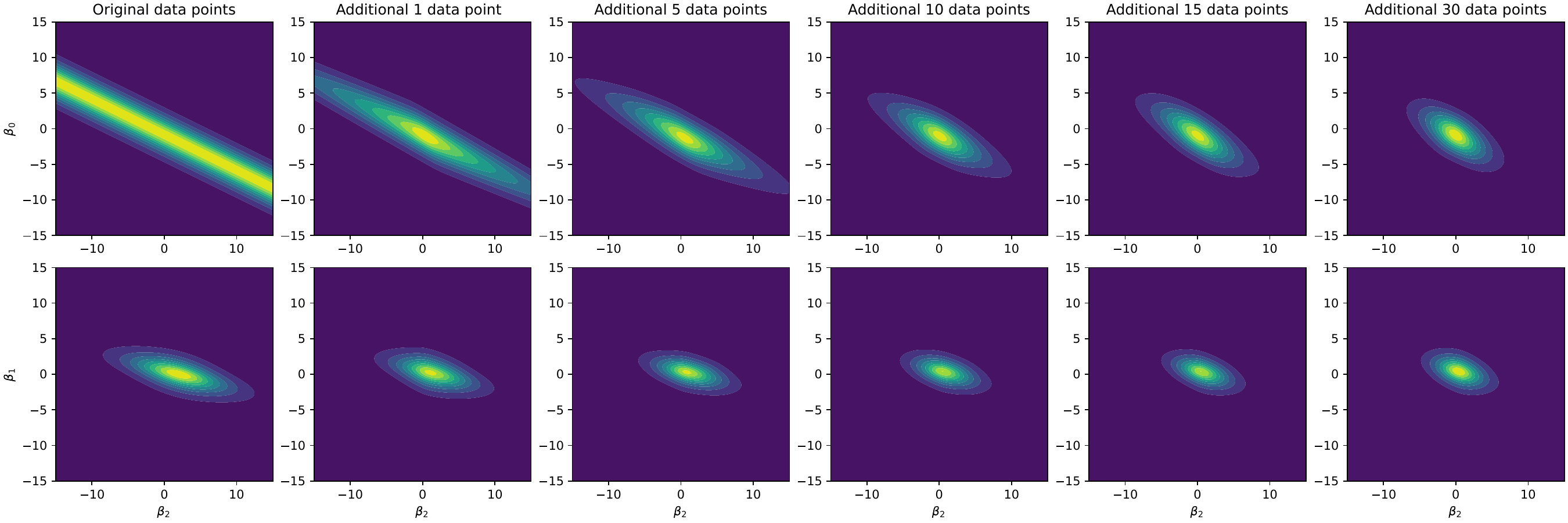}}
\caption{Changes in the prior due to the introduction of test covariates drawn from a shifted distribution  $x^* \sim \mathcal{N}(\frac{1}{2}, 1)$, where both features vary.
} 
\label{fig:Boltzman}
\end{center}
\vskip -0.2in
\end{figure*}

\subsection{\methodname{}: Variational inference under distribution shifts} \label{sec:variational}

Given this prior, our goal is to estimate the posterior $p(\theta \mid x_{1:N}, y_{1:N}, x^*)$,  which enables approximate computation of predictive uncertainty for the test input $x^*$ according to \Cref{eq:adapted_predictive}.
For this, we fit a variational distribution $q_\phi(\theta; x^*) \approx p(\theta \mid x_{1:N}, y_{1:N}, x^*)$. This variational distribution is an \emph{amortized posterior approximation}, parametrized as a function of $x^*$, allowing approximation of the posterior across multiple test-time covariates $x^*$. 

We model the amortized posterior as a multivariate Gaussian distribution, parametrized by its mean $\mu$ and diagonal covariance matrix $\Sigma$. Let $\mathcal{Q}_d$ denote the family of $d$-dimensional Gaussian distributions with diagonal covariance.
Thus, we seek $q_\phi(\theta; x^*)$ with $\phi=(\mu, \Sigma)$, that minimizes the Kullback-Leibler divergence to the true posterior:
\begin{align} \label{eq:KL}
    \min_{q_\phi \in \mathcal{Q}_d} \text{KL}\left(q_\phi(\theta; x^*) \left\Vert \, p\left(\theta\mid x_{1:N},y_{1:N}, x^* \right)\right.\right).
\end{align} 

Specifically, we optimize the evidence lower bound (ELBO) on the log-likelihood~\citep{blei:2017,vae,rezende2015variational}
\begin{align} 
 & \mathcal{L}\left(\phi; x^{*}, \mathcal{D} \right) =\mathbb{E}_{q_\phi}\left[\log p\left(y_{1:N}\vert x_{1:N},\theta\right)\right]-\text{KL}\left(q_\phi(\theta; x^*) \,\Vert\, p\left(\theta\vert x_{1:N},x^{*}\right)\right), \label{eq:practical_loss}
\end{align}
which is equivalent to solving \Cref{eq:KL}.

We train a neural network $h_\gamma$ with weights $\gamma$, to output $\phi$ from $x^*$, and optimize $\gamma$ rather than $\phi$ directly.
Given the training set  $\mathcal{D}=\left\{(x_{i},y_{i})\right\}_{i=1}^{N}$ and $M$ test covariates $\{x^*_j\}_{j=1}^M$, we define the following objective to fit our amortized posterior
\begin{equation} \label{eq:elbo_sum}
    \mathcal{L}_{\mathcal{D}}(\gamma) = \sum_{j=1}^M \mathcal{L}\left(\phi_j; x_j^{*}, \mathcal{D} \right) = \sum_{j=1}^M \mathcal{L}\left(h_\gamma(x_j^{*}); x_j^{*}, \mathcal{D} \right).
\end{equation}
Note that evaluation of this objective requires estimation of the prior 
\begin{equation}
    p(\theta \vert x_{1:N}, x^*),
\end{equation}
which involves integration over the outcome space $\mathcal{Y}$.
For discrete $\mathcal{Y}$ the integration is simply summation, and thus can be easily computed. If $\mathcal{Y}$ is continuous, we apply Monte Carlo integration: we sample $r$ target values uniformly from an integration range $[y_{\min}, y_{\max}]$, and for each sample compute the log-likelihood  under a unit-variance Gaussian centered at each predicted value.

\subsection{The variational family and stochastic optimization of the variational objective}

To fully describe \methodname{}, it remains to define the amortized variational family and the optimization procedure to maximize Equation \ref{eq:elbo_sum}. So far, the variational posterior has been amortized with respect to $x^*$. Now, we will amortize also with respect to the training set, in order to better optimize across multiple environments, as described in the next section.

The variational family is $q_\phi(\theta ; x^*, x_{1:n})$ where the parameters $\phi$ come from an inference network. We decompose the inference network into an embedding network $g: \mathbb{R}^d \rightarrow \mathbb{R}^k$, parametrized by $\xi$, and a prediction layer parametrized by $\theta$. We assume that $g$ has been pre-trained to maximize the likelihood $p(y \vert x, \theta) = f_\theta(g_\xi(x))$ on the training dataset. We focus on the prediction layer $\theta$.

Given the training data $x_{1:n}$ and test covariate $x^*$, we define the inference network as follows:
\begin{samepage}
\begin{enumerate}
\item We compute the embeddings of the training set, $\hat{g}(x_1), \dots, \hat{g}(x_n)$, and aggregate them into a single summary statistic (e.g., the mean)\footnote{The summary embedding $\overline{g}(x_{1}, \dots, x_{n})$ is a permutation-invariant aggregation of learned element-wise representations, and thus follows the Deep Sets setting \citep{zaheer2017deep}.} denoted by $\overline{g}(x_{1}, \dots, x_{n})$.

\item We compute the embedding of the test covariate, $\hat{g}(x^*)$.

\item The aggregated training embedding and the test embedding are concatenated and passed through a network $h_\gamma: \mathbb{R}^{2k} \rightarrow \Phi$, which outputs the parameters $\phi = (\mu, \Sigma)$ of the variational distribution $q_\phi(\theta \mid x_{1:n}, x^*)$.
\end{enumerate}
\end{samepage}

In practice, we optimize a single-sample Monte Carlo estimate of the expectations in Equation \ref{eq:practical_loss}. To enable differentiable sampling from $q_{\phi}$, we use the \emph{reparametrization trick} \citep{vae}, where we sample $\theta \sim q_{\phi}$ by first sampling $\epsilon$ from a standard Gaussian and then calculating $\theta = \mu + \Sigma \epsilon$. The full procedure is summarized in Algorithm \ref{alg1} and illustrated in Figure \ref{fig:diagram}.

\begin{figure*}[t] 
\vskip 0.2in
\begin{center}
\centerline{\includegraphics[width=1.0\columnwidth]{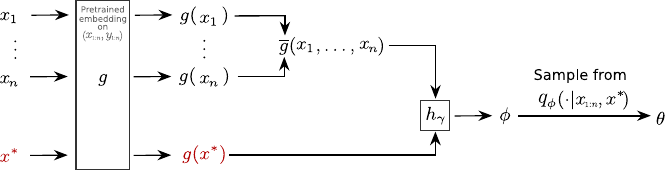}}
\caption{Optimization mechanism for a single test example in a single synthetic environment.}
\label{fig:diagram}
\end{center}
\vskip -0.2in
\end{figure*} 

\begin{algorithm}[t] 
\caption{Variational posterior} \label{alg1}
\begin{algorithmic} [1]
\STATE \textbf{Input:} Training data $\mathcal{D}$, covariates $\{x^*_j\}_{j=1}^M$,
pre-trained embedding $g_{\hat{\xi}}$, predictor $f(\cdot; \theta)$,  iterations $K$, learning rate $\eta$, initialization $\gamma^{(0)}$.
\vspace{0.5em}
\STATE Compute train embeddings       
        $g_{\hat{\xi}}(x_{1}),\dots g_{\hat{\xi}}(x_{N})$
        and test embeddings 
        $g_{\hat{\xi}}(x_{1}^{*}), \dots, g_{\hat{\xi}}(x_{M}^{*})$
        \STATE Aggregate train embeddings to obtain $\overline{g}_{\hat{\xi}}(x_{1},\dots,x_{N})$
\FOR{$1 \leq k \leq K$}
        \FOR{$1 \leq j \leq M$}
            \STATE 
            Compute \hspace{0.01em} $\phi^{(k)}_{j} = h(\overline{g}_{\hat{\xi}}(x_{1:N}),g_{\hat{\xi}}(x_{j}^{*});\gamma^{(k-1)})$
            \STATE Sample $\epsilon^{(k)}_j$ and compute
            $\theta^{(k)}_j = \mu^{(k)}_j + \Sigma^{(k)}_j \cdot \epsilon^{(k)}_j$
            for $(\mu^{(k)}_j, \Sigma^{(k)}_j) = \phi^{(k)}_j$
            \STATE 
            Compute 
\[
\begin{array}{ll}
p(y_{1:N} \mid x_{1:N}, \theta^{(k)}_j) 
= \{f_{\theta^{(k)}_j}(g_{\hat{\xi}}(x_i))\}_{i=1}^N, & 
p(y_j^* \mid x_j^*, \theta^{(k)}_j) = f_{\theta^{(k)}_j}(g_{\hat{\xi}}(x_j^*)), \\[0.5em]
p(\theta^{(k)}_j \mid x_{1:N}, x_j^*) \text{ (prior; Equation \ref{eq:prior})}, & 
q_{\phi_j}(\theta^{(k)}_j \mid x_{1:N}, y_{1:N}, x_j^*)
\end{array}
\]
        \ENDFOR
        \STATE 
        Compute $\mathcal{L}^{(k)} =  \sum_{j=1}^m \mathcal{L}_{\mathcal{D}}(\phi_{1:m}^{(k)})$ 
     \STATE Update the parameters of $h$ by performing a gradient ascent step:
     \small{
    \[
    \gamma^{(k)} \leftarrow \gamma^{(k-1)} + \eta \nabla_\gamma  
    \mathcal{L}^{(k)}_{\mathcal{D}}
    \]}
    \vspace{-2em}
\ENDFOR \\
\textbf{Return:} $\hat{\gamma} \coloneqq \gamma^{(K)}$
\end{algorithmic}
\end{algorithm}

\subsection{Uncertainty estimation}
With training completed using \Cref{alg1}, we finally turn to estimating predictive uncertainty for a new test input $x^*$, i.e., from the posterior predictive distribution of \Cref{eq:adapted_predictive}.

Given the learned parameters $\hat{\gamma}$, we evaluate the representation of the test point $g_{\hat{\xi}}(x^*)$ and use it with the combined representation of the training data to compute the variational posterior parameters via 
$\hat{\phi} = h_{\hat{\gamma}}(\overline{g}_{\hat{\xi}} (x_{1:N})$. We then approximate the posterior predictive by drawing samples $\theta^{(1)}, \dots, \theta^{(S)} \sim q_{\hat{\phi}}$ and calculating the corresponding predictions $f_{\theta^{(s)}}(g_{\hat{\xi}}(x^*))$.

\section{Posterior estimation across multiple distribution shifts} \label{sec:envs}

So far, we have developed a method to approximate the posterior given training data $\mathcal{D}=\{(x_i, y_i)\}_{i=1}^N$ and a set of test covariates $\{x_j^*\}_{j=1}^M$, drawn from a shifted distribution. However, a key challenge in real-world settings is that such test covariates are typically unavailable in advance.
To address this limitation, similarly to \citep{slavutsky2023class}, we generate \emph{synthetic environments} via subsampling from the training data. 

Specifically, we construct $L$ environments by sampling $L$ pairs of datasets 
\begin{align} \label{eq:samples}
\mathcal{D}_{\text{tr}}^{(\ell)}  =\{(x_{1}^{(\ell)},y_{1}^{(\ell)}),\dots,(x_{n}^{(\ell)},y_{n}^{(\ell)})\}, \quad
\mathcal{D}_{\text{te}}^{(\ell)}  =\{(x_{1}^{*(\ell)},y_{1}^{*(\ell)}),\dots,(x_{m}^{*(\ell)},y_{m}^{*(\ell)})\},
\end{align}
where each dataset pair is constructed by sampling data pairs  $(x_i,y_i)$ uniformly at random with replacement from the original training set $\mathcal{D}$.
 We refer to each resulting pair of datasets as a synthetic environment, denoted $e^{(\ell)} = \big\{\mathcal{D}_{\text{tr}}^{(\ell)}, \mathcal{D}_{\text{te}}^{(\ell)} \big\}$.  Each such synthetic test set $\mathcal{D}_{\text{te}}^{(\ell)}$ is likely to exhibit a different 
empirical distribution, thereby simulating a potential covariate shift.

The core idea of our approach can viewed as an inverse bootstrap sampling: while bootstrap sampling relies on subsamples of the original dataset being highly likely to resemble the population distribution, we instead focus on the low-probability cases where the subsample deviates from the original dataset's distribution. These deviations simulate potential distribution shifts that may arise at test time. 

In the following proposition we show that drawing enough subsamples guarantees that with high probability, at least one of them will be close to the true unknown test distribution. 

\begin{proposition_inf} \label{prop:n_envs}
    Let $p$ and $p^*$ be binned distributions of the training data and the unobserved test set, respectively. Assume that $\mathrm{supp}(p^*) \subseteq \mathrm{supp}(p)$. Then, for any $\epsilon > 0$ and $ 0 \leq \alpha <1$, there exist $m$ and $L$ such that, with probability at least $1 - \alpha$, the empirical distribution of at least one of $L$ randomly drawn subsamples of size $m$ from the training data satisfies  $\| \hat{p}^{(\ell)} - p^*\|_1 \leq \epsilon$.
\end{proposition_inf}
\Cref{supp:proof_envs} gives a formal proposition, the proof, an application to the case where $\supp p^* \not \subseteq \supp p$, and analysis of relationships between the number of required synthetic environments $L$ to $\epsilon$ and  $\alpha$.

While the proposition guarantees that, given enough sampled environments, at least one will approximate the true (unknown) test-time shift, it remains unclear which one that is. To address this, we aim to ensure that the learned posterior performs well across all synthetic environments. This motivates the use of environment-level penalties, inspired by the out-of-distribution (OOD) generalization literature \citep{arjovsky2019invariant, wald2021calibration, krueger2021out}. 

Thus, we introduce the following cross-environment objective:
\begin{align}\label{eq:balancing}
 & \mathcal{L}^{(\ell)}=\sum_{x^{*}\in\mathcal{D}_{\text{te}}^{(\ell)}}\mathcal{L}_{\mathcal{D}_{\text{tr}}^{(\ell)}}(\phi_{1:m}^{(\ell)};x^{*})\\
 & \textstyle \mathcal{L}=\sum_{\ell=1}^{L}\mathcal{L}^{(\ell)}+\tau\;\text{Var}\left(\mathcal{L}^{(1)},\dots,\mathcal{L}^{(L)}\right).
\end{align}
Here, we set the penalty to the variance across the environments, as proposed by \citet{krueger2021out}.
In Algorithm \ref{alg:synthetic} we summarize the complete procedure of variational posterior estimation with synthetic environments (see Appendix \ref{sup:alg}).

\section{Experiments} \label{sec:experiments}
We evaluate \methodname{} on both synthetic and real-world datasets, across classification and regression tasks.
We compare the uncertainty estimates produced by \methodname{} (ours)  with previous distance aware methods: 
SNGP \citep{liu2020simple}, DUE \citep{van2021feature}, and distance uncertainty layers (DUL) \citep{park2024density} (see \S \ref{sec:related} for more details). In all experiments, the same neural network architecture is used as the prediction model. 
Hyper-parameters of our and competing methods were optimized via grid search to maximize average performance (accuracy for classification, RMSE for regression) on a single sample of $J=50$ synthetic test environments of size $m=10$, which was discarded from the analysis. For the corresponding values, and additional implementation details see Appendix \ref{sup:implement}. 

\subsection{Synthetic data}
We begin by examining two synthetic examples exhibiting covariate shifts.
Each experiment uses $N=M=500$ training and test points, and \methodname{} constructs synthetic environments of size $m=20$. 
All models use a fully connected neural network with one hidden layer of width $d=8$. 

\textbf{4.1.1 \hspace{0.25em} Regression: heteroscedastic linear model}  \hspace{0.25em}
We sample $x \sim \mathcal{U}[0,a]$, and $x^* \sim \mathcal{U}[0,b]$ for $a<b$. 
Outcomes for train and test follow
$y = \beta x + \epsilon(x)$ where $\epsilon(x) \sim \mathcal{N}(0, \frac{x}{10})$. Results for $a=0.5, b=1$ and $\beta=1$ are shown in figure \ref{fig:simulations}, and results for additional settings in Appendix \ref{sup:experiments}. 

\textbf{\emph{Results:}} \methodname{} consistently achieves the lowest RMSE and is the only one to capture the correct variance structure. DUE exhibits low variance but underestimates uncertainty at higher $x$ values. SNGP overfits, yielding poor test performance and excessive variance. DUL provides the best posterior mean among competitors, but overestimates uncertainty due to incorrect variance modeling.

\textbf{4.1.2 \hspace{0.25em}Classification: logistic regression with missing data} \hspace{0.25em}
For both the training and test sets, we sample $x \sim \text{Beta}(\nicefrac{1}{2}, \nicefrac{1}{2})$ and compute $\rho(x)=\sigma(-5+10 x)$ where $\sigma$ denotes the standard sigmoid function. Outcomes are then generated according to $y\vert x \sim \text{Ber}(\rho(x))$.
However, in the training data, we exclude middle values within the range ($t, 1-t)$. Results for $t=0.3$ are shown in Figure \ref{fig:simulations}.

\textbf{\emph{Results:}} \methodname{} achieves the highest accuracy and lowest calibration error. In addition, as can be seen in Figure \ref{fig:simulations}, \methodname{} exhibits lowest variance while correctly modeling uncertainty: higher in the unseen middle and lower at the edges. DUE and, to a greater extent, DUL predict well but misrepresent uncertainty, increasing variance at the edges, with DUL overestimating variance overall. SNGP estimates higher variance in the middle but consistently under-predicts.

\begin{figure}[t] 
  \centering
  \includegraphics[width=0.85\columnwidth]{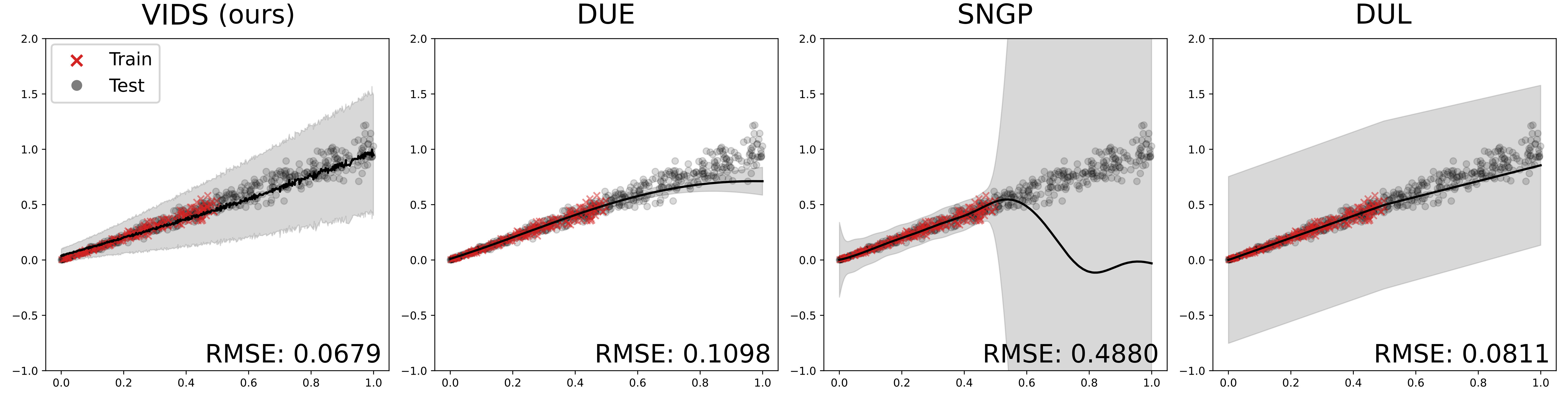} \vspace{-0.5em}
  
  \includegraphics[width=0.85\columnwidth]{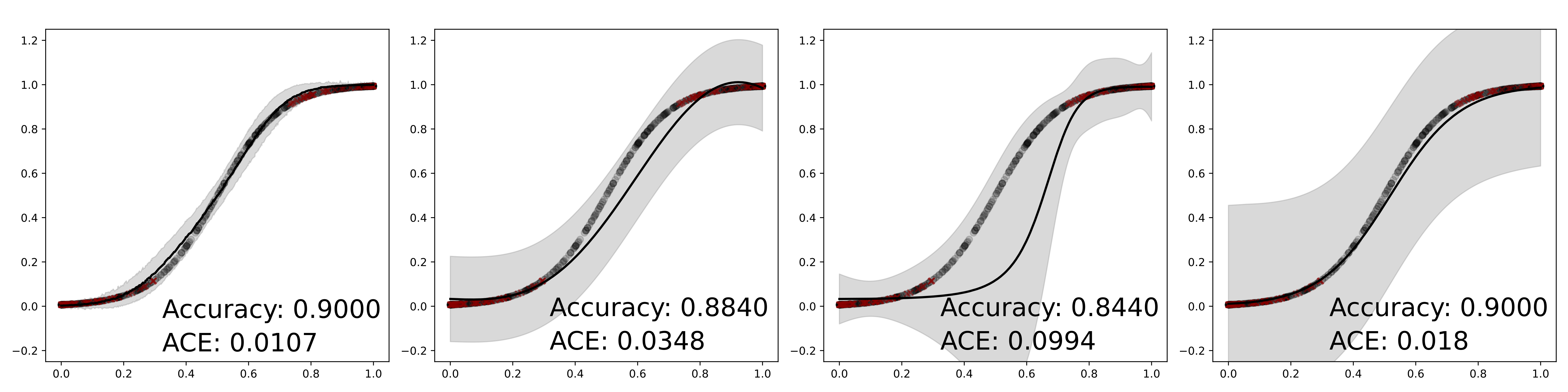}
  \caption{Simulation results. Red crosses represent training data, and gray dots test data. Black lines depict predictions, gray shaded area spanning $\pm 1$ standard deviation.  
  Top:  Heteroskedastic linear regression  for $a=0.5$; Bottom: Binary classification with missing data for $t=0.3$. \methodname{} is the only one to capture correct variance structures and thus achieves the best results.
  }
  \label{fig:simulations}
\end{figure}

\subsection{Real data}

\subsubsection{Classification}

For the classification experiments we use a simple convolutional neural-network with two convolution blocks (see details in Appendix \ref{sup:implement}). 

\textbf{Corrupted CIFAR-10} We evaluate model performance under corruption-induced distribution shifts using the CIFAR-10-C dataset \citep{hendrycks2019benchmarking}. We perform experiments on three corruption types: defocus blur, glass blur, and contrast. 
We construct the training set of 5000 images, 90\% clean from the original CIFAR-10 dataset \citep{cifar} and 10\% corrupted images from CIFAR-10-C, while the test set is constructed with 5000 images, 90\% corrupted  and 10\% clean.

\textbf{Celeb-A} 
For each experiment, we choose one annotated attribute as the target, and another attribute $A$ to induce a distribution shift in the CelebA dataset \citep{liu2015faceattributes}.  The training set contains 500 images with $90\%$ having $A=1$ and $10\%$ with $A=0$; the test set reverses this ratio: $90\%$ images with $A=0$ and $10\%$ with $A=1$. We run three such experiments with the following shift–target pairs: (i) Pale Skin → Blond Hair, (ii) Heavy Makeup → Male, and (iii) Gray Hair → Blond Hair.

\textbf{\emph{Results:}} Figure \ref{fig:classification} shows that in all 6 classification experiments \methodname{} achieves better accuracy. 

\begin{figure} 
  \centering
  \includegraphics[width=1.0\columnwidth]{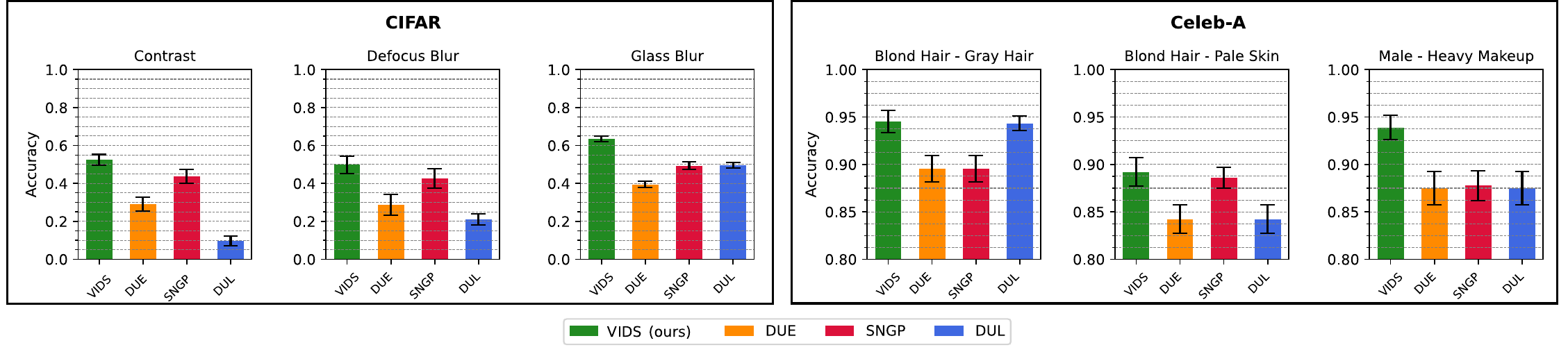} 
  \caption{Classification accuracy over 10 repetitions. 
  Celeb-A titles formatted as Target -- Shift Attr. \methodname{} achieves highest accuracy in all experiments.
  }
  \label{fig:classification}
\end{figure}

\subsubsection{Regression}
We conduct experiments on three UCI regression datasets—Boston, Concrete, and Wine. For each, we designate a prediction target and apply K-Means clustering ($K=2$) on all numeric covariates.
To simulate a non-trivial covariate shift, we use the cluster with the higher average within-cluster Euclidean distance primarily for training (90\% of training data), and the lower-distance cluster primarily for testing (90\% of test data).
In all these experiments we use  a simple linear regression (one-layer network) as the base model. For additional details see Appendix \ref{sup:implement}.

\textbf{\emph{Results:}}  Figure \ref{fig:UCI} 
shows that \methodname{} achieves the lowest average RMSE across all three datasets and consistently low variance,
while DUE under-performs on the Concrete and Boston datasets and DUL considerably under-performs on the Wine dataset. 
\begin{figure}[t] 
  \centering
  \includegraphics[width=0.85\columnwidth]{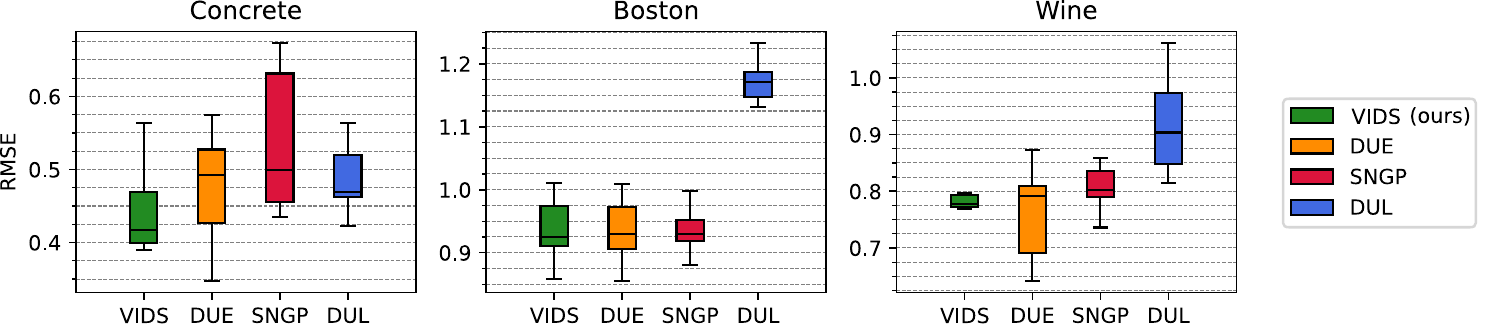} 
  \caption{RMSE results for three experiments on three UCI regression datasets over 10 repetitions. \methodname{} achieves lowest or comparable RMSE in all experiments.}
  \label{fig:UCI}
\end{figure}
\vspace{-0.8em}
\clearpage

\section{Conclusion}
We introduced VIDS, a Bayesian method for quantifying uncertainty under potential distribution shifts. VIDS leverages two central ideas: incorporating dependencies between network parameters and the test covariates, and balancing performance across synthetic environments to simulate covariate shifts. VIDS consistently outperforms existing methods, particularly when assumptions like homoscedasticity or continuity are violated.

\section{Acknowledgments}
We are grateful to members of the Blei Lab for fruitful discussions and feedback. In particular, to Sebastian Salazar, Eli N. Weinstein, Andrew Jesson, Nicolas Beltran, and Sweta Karlekar. We thank Andrew Jesson for verifying the DUE implementation. This work is supported by NSF IIS-2127869, NSF DMS-2311108, ONR N000142412243, the Simons Foundation and DoD OUSD (R\&E) under Cooperative Agreement PHY-2229929 (The NSF AI Institute for Artificial and Natural Intelligence). YS is supported by a Founder’s Postdoctoral Fellowship, Department of Statistics, Columbia University.

\clearpage

\setcitestyle{numbers}
\bibliographystyle{plainnat}
\bibliography{references}

\vspace{52em}
\pagebreak
\appendix

\section{Example for the adaptive prior} \label{sup:prior_example}

We revisit the simple example from \S \ref{sec:intro}, considering a binary prediction setting where $y\in \{0,1\}$ and the covariates $x \in \mathbb{R}^2$ consist of two features, $x^{(1)}$ and $x^{(2)}$. In the training data  $x^{(2)}$ remains constant across all examples. However, at test time, new subtypes may be encountered where both features exhibit variability.

Assume the following logistic model
\begin{align}
 & \rho(x)=\beta_{0}+\beta_{1}x^{(1)}+\beta_{2}x^{(1)}, \quad p(y|x)=\sigma(\rho(x)),
\end{align}
where $\sigma(x)=1/(1+e^{-\rho(x)})$ and $\theta=(\beta_{0}, \beta_{1}, \beta_{2})^\intercal$.

Due to the exchangeability between $\beta_{0}$ and $\beta_{2}$ in the training data, all combinations with a fixed value of $\beta_{0} + \beta_{2}$ are equally plausible when observing only the training data. However, this symmetry is broken when test data from new subtypes is observed.

As a concrete example, assume that for all datapoints (train and test) the first feature is distributed as  $x^{(1)}  \sim\mathcal{N}\left(1,1\right)$. In the training data, $x_{i}^{(2)}=\frac{1}{2}$ for all $1\leq i \leq N$; however, for new test examples $x^*$, the second feature is drawn from $ \mathcal{N}\left(\frac{1}{2},1\right)$.
Let $\theta' \coloneqq (\beta_{2}, \beta_{1}, \beta_{0})^\intercal$. While for any of  the first $N$ datapoints, $p(y\vert x_i, \theta) = p(y\vert x_i, \theta')$, 
this no longer holds  for $x^*$, thus leading to changes in the prior distribution upon arrival of a new test example $x^*$.

Figure \ref{fig:Boltzman} illustrates how the prior distribution, which for $y \in \{0,1\}$ corresponds to 
\begin{align}
p(\theta\vert x_{1:N}) \propto \exp \left( \sum_{i=1}^{N} \log p(y=0\vert x_{i},\theta) + \sum_{i=1}^{N} \log p(y=1\vert x_{i},\theta)\right),
\end{align}
 adapts when new test examples are observed. 
 
For the training data, any combination of $\beta_0$ and $\beta_2$ with a fixed sum $\beta_0 + \beta_2$ results in the same probability. However, as new examples with varying values of $x^{(2)}$ are introduced, the prior distribution begins to depend on how the weight is distributed between $\beta_0$ and $\beta_2$. Consequently, the equivalence regions concentrate around specific combinations of $\beta_0$ and $\beta_2$.

If, instead,  the second feature of the test examples $x^*$, is drawn from a distribution closely aligned with the training data, $ \mathcal{N}\left(\frac{1}{2},\frac{1}{10^2}\right)$, i.e., a distribution closely similar to the training data, the resulting changes to the prior are minimal. This is illustrated in Figure \ref{fig:Boltzman_no_shift}.

\begin{figure*}[ht!] 
\vskip 0.2in
\begin{center}
\centerline{\includegraphics[width=1.0\columnwidth]{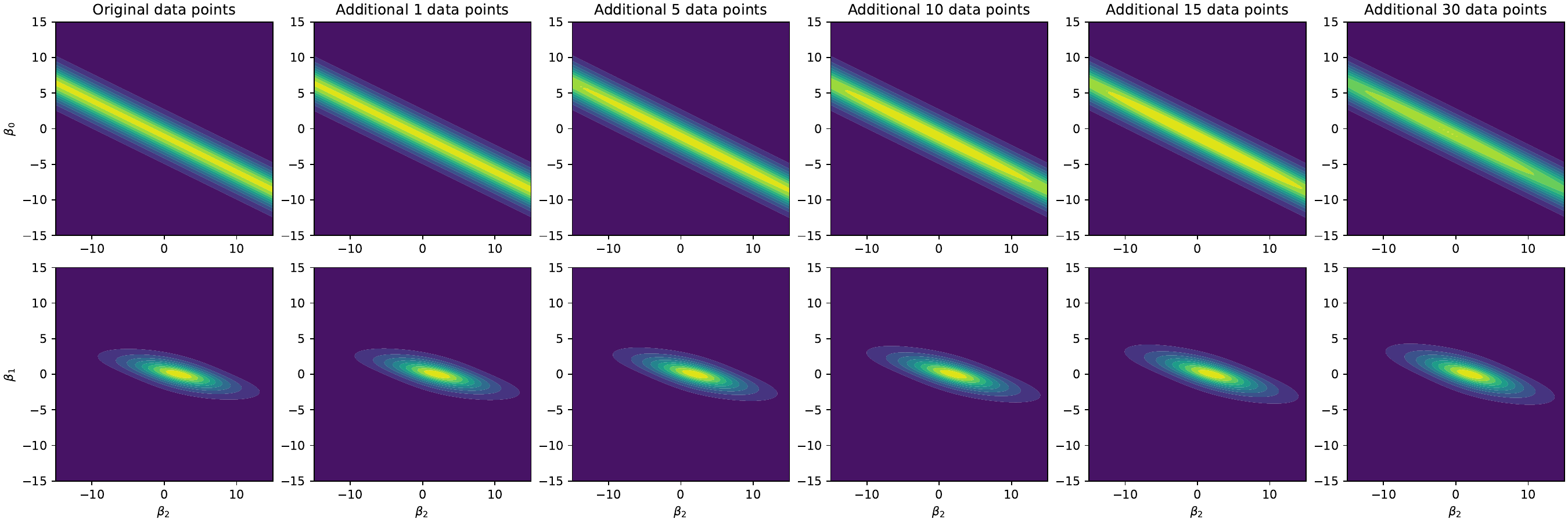}}
\caption{Changes in the prior due to introduction of test covariates drawn from a distribution similar to the training data, $x^* \sim \mathcal{N}(\frac{1}{2}, \frac{1}{10^2})$. In the top row, $\beta_0$ is fixed at $-1$, while in the bottom row, $\beta_1$ is fixed at $1$.} 
\label{fig:Boltzman_no_shift}
\end{center}
\vskip -0.2in
\end{figure*} 

\newpage

\section{Theoretical analysis of synthetic-environment sampling} \label{supp:proof_envs}

The statement of Proposition~\ref{prop:n_envs} holds for any pair of finite distributions $p,p^*\in\Delta^k$, not just those arising from our specific application—namely, the empirical distribution of the training set and the binned test distribution. In what follows, we formally state and prove this more general result, and then return to its application in our setting.

\begin{proposition} \label{thm:n_envs}
    Let $B_1,\dots,B_k$ be a partition of $\mathcal X$. 
    Denote the binned empirical distribution of the train set $x_{1:N}$ as 
     $p\in\Delta^k$, where $p(i) = \frac{1}{N} \sum_{j=1}^N \indicator \{x_j \in B_i \} \forall 1\leq i\leq k$. 
     Similarly, define the binned test distribution induced by the partition as $p^*(i) = \int_{B_i}p_{x^*}(x)\, dx$. 
     Assume that $\supp p^* \subseteq \supp p$. Then for any $\epsilon>0$ and
    $\alpha \in (0, 1)$, there exist integers $m, L \in \mathbb N$, such that
     if $L$ independent samples of size $m$ are drawn according to $p$, 
     then with probability at least $1-\alpha$, at least one sample will induce a binned empirical distribution $\hat{p}^{(\ell)}$
     satisfying $\| \hat{p}^{(\ell)} - p^*\|_1 \leq \epsilon$.
\end{proposition}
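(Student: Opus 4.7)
The strategy is to reduce the problem to a two-step argument: (i) for an appropriate $m$, show there is a strictly positive probability $q=q(m,\epsilon)$ that a single subsample of size $m$ drawn from $p$ yields a binned empirical distribution $\hat p$ with $\|\hat p - p^*\|_1\leq\epsilon$; (ii) then use independence across the $L$ trials to boost this probability to $1-\alpha$ by choosing $L$ large enough.

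\textbf{Step (i): construct a favorable target count vector.} Fix $\epsilon>0$. For any integer $m$ large enough (specifically $m\geq 2k/\epsilon$), one can choose integers $n_1,\dots,n_k$ with $\sum_i n_i = m$, with $n_i = 0$ whenever $p^*(i)=0$, and with $|n_i/m - p^*(i)|\leq 1/m$ for each $i$ (round $mp^*(i)$ down and distribute the remaining mass among indices with $p^*(i)>0$). This gives
\begin{equation*}
\|n/m - p^*\|_1 \;=\; \sum_{i=1}^k |n_i/m - p^*(i)| \;\leq\; k/m \;\leq\; \epsilon/2 \;<\; \epsilon.
\end{equation*}

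\textbf{Step (ii): lower bound the single-trial success probability.} The probability that a size-$m$ multinomial sample from $p$ produces exactly the counts $(n_1,\dots,n_k)$ is
\begin{equation*}
q \;\coloneqq\; \binom{m}{n_1,\dots,n_k}\prod_{i=1}^k p(i)^{n_i}.
\end{equation*}
The support hypothesis $\supp p^* \subseteq \supp p$ is the crucial ingredient here: whenever $n_i>0$ we have $p^*(i)>0$, hence $p(i)>0$, so each factor $p(i)^{n_i}$ is strictly positive, and therefore $q>0$. (If any $p(i)=0$ corresponded to a positive $n_i$, we would have $q=0$ and the argument would fail — this is precisely where the support assumption is used.) Since achieving the exact counts $(n_1,\dots,n_k)$ implies $\|\hat p - p^*\|_1\leq\epsilon$, the event of interest has probability at least $q$.

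\textbf{Step (iii): amplify via independent trials.} The $L$ subsamples are drawn independently, so the probability that none of them achieves $\|\hat p^{(\ell)} - p^*\|_1\leq\epsilon$ is at most $(1-q)^L$. Choosing
\begin{equation*}
L \;\geq\; \left\lceil \frac{\log\alpha}{\log(1-q)} \right\rceil
\end{equation*}
ensures $(1-q)^L\leq\alpha$, completing the proof. The main technical subtlety is Step (i), where the rounding must simultaneously respect the support constraint and the $\ell_1$ budget; once this is set up correctly, Steps (ii) and (iii) are immediate. I do not expect any genuine difficulty beyond bookkeeping, since the proposition asserts only existence of $m$ and $L$ (no rates), so crude bounds suffice.
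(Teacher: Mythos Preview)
Your proposal is correct and follows essentially the same route as the paper: construct a rational approximation $n/m$ to $p^*$ by rounding, use the support assumption to conclude the multinomial probability of hitting that exact type is strictly positive, then amplify across $L$ independent trials. The only cosmetic difference is that the paper invokes the method of types (Cover--Thomas) to obtain an explicit lower bound $\xi=(m+1)^{-k}e^{-m\KL(q\|p)}$ on the single-trial probability, which feeds a subsequent remark on the growth of $L$ in $\epsilon$ and $\alpha$; for the existence statement itself, your bare positivity argument is equally sufficient.
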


\begin{proof}
Let $m:=\lceil \frac{2(k-1)}{\epsilon} \rceil$ and define $q\in\Delta^k$ as follows
    \begin{equation}
        q(i) := \begin{cases}
            \frac{\lfloor m\, p^*(i)\rfloor}{m}, & 1\leq i \leq k-1 \\
            1 - \sum_{i'=1}^{k-1} \frac{\lfloor m\, p^*(i')\rfloor}{m}, & i = k.
        \end{cases}
    \end{equation}
    
    For $1\leq i \leq k-1$, we have $|q(i)-p^*(i)|\leq \frac{1}{m}$, and $|q(k)-p^*(k)|\leq\frac{k-1}{m}$. Hence, $\|q-p^*\|_1\leq\frac{2(k-1)}{m}\leq \epsilon$.

    Define
    \begin{equation}
        T = \left\{ z=z_{1:m} \,;\, p_z = q \right\}
    \end{equation}
    the set of all samples of size $m$ drawn according to $p$, such that their empirical distribution $p_z=q$. This is the type-set of $q$. Denote the probability of drawing a given sample $z=z_{1:m}$ by $p^m(z) := \prod_{j=1}^m p(z_j)$. Then, by the method of types (see, e.g., Theorem 11.1.4 in \cite{CoverThomas}), the probability of drawing a sample whose empirical distribution equals $q$ is
    \begin{equation}
        p^m(T) = \sum_{z\in T} p^m(z) \geq \frac{1}{(m+1)^k}e^{-m\KL(q\|p)} =: \xi.
    \end{equation}
    Consequently, the probability of drawing $L$ samples of size $m$ according to $p$, such that the empirical distribution of at least one of them is $q$ satisfies
    \begin{equation}
        1-\big(1-p^m(T)\big)^L \geq 1-( 1 - \xi )^L .
    \end{equation}

    Taking $L\geq \frac{\log \alpha}{\log ( 1 - \xi )}$ completes the proof.
\end{proof}

\begin{remark} \label{remark:relationship}
Using the inequality $\log (1-\xi) \geq -\frac{\xi}{1-\xi}$, we have
    \begin{align}
        \frac{\log \alpha}{\log(1-\xi)} &\leq \frac{1-\xi}{\xi}\log\frac{1}{\alpha} = \left( (m+1)^k e^{m\KL(q\|p)} - 1 \right)\log\frac{1}{\alpha} \\
        &\leq \left( \left(\frac{2(k-1)}{\epsilon} + 2\right)^k e^{\left( \frac{2(k-1)}{\epsilon} + 1 \right) \KL(q||p)} - 1\right) \log\frac{1}{\alpha}  = O\left(\left(\frac{1}{\epsilon}\right)^{k}e^{\frac{1}{\epsilon}}\log\frac{1}{\alpha}\right),
    \end{align}
    revealing the relationship of $L$ to the tolerance $\epsilon$ and to $\alpha$.
\end{remark}

\paragraph{Application to our setting} 

We now return to the setting in which  $p\in\Delta^k$ denotes the binned empirical distribution of the train set $x_{1:N}$, induced by the partition of $\mathcal{X}$. Specifically,
\begin{equation}
    p(i) = \frac{1}{N} \sum_{j=1}^N \indicator \{x_j \in B_i \} 
\end{equation}
for all $1\leq i\leq k$.
Similarly, let $p^* \in\Delta^k$ denote the binned test distribution induced by the partition, given by
\begin{equation}
    p^*(i) = \int_{B_i}p_{x^*}(x)\, dx .
\end{equation}

If $\supp p^* \subseteq \supp p$, then the conditions of Proposition \ref{prop:n_envs} are satisfied, and thus drawing enough subsamples from $x_{1:N}$ guarantees that, with high probability, in the binned space at least one of them will be close to the test distribution. 

However, if there exists a bin $B_i$, such that $x_j \notin B_i$ for all $1\leq j\leq N$ but $\int_{B_i}p_{x^*}(x)\,dx > 0$, then if $\Vert p - p^*\Vert_1 =\epsilon' < \epsilon$,  by discarding any bins $B_i$ such that $B_i \in \supp p^*$ and $B_i \notin \supp p$, and re-normalizing the probabilities on the remaining bins to sum to 1, we obtain a reduced distribution for which we can require a distance to $p^*$ that is at most $\epsilon-\epsilon'$. Consequently, Proposition \ref{prop:n_envs} can be applied directly to this re-normalized distribution, guaranteeing with high probability that at least one subsample will be close to the test distribution within the reduced tolerance $\epsilon-\epsilon'$.

\section{Multi-environment algorithm} \label{sup:alg}
\begin{algorithm}[H] 
\caption{Variational posterior with synthetic environments} \label{alg:synthetic}
\begin{algorithmic} [1]
\STATE \textbf{Input:} Data $\mathcal{D}$, no.\ synthetic environments $L$, train and test environment sizes $n$ and $m$, 
pre-trained embedding $g_{\hat{\xi}}$, predictor $f(\cdot; \theta)$, iterations $K$, learning rate $\eta$, initialization $\gamma^{(0)}$.
\vspace{0.5em}

\FOR{$1 \leq k \leq K$}
    \FOR{$1 \leq \ell \leq L$}
        \STATE Sample synthetic datasets:
        $
        \mathcal{D}_{\text{tr}}^{(\ell)} = \{(x_{i}^{(\ell)}, y_{i}^{(\ell)})\}_{i=1}^n, \quad
        \mathcal{D}_{\text{te}}^{(\ell)} = \{(x_{j}^{*(\ell)}, y_{j}^{*(\ell)})\}_{j=1}^m
        $
        \STATE Compute and test embeddings:       
        $g_{\hat{\xi}}(x_{1}^{(\ell)}),\dots g_{\hat{\xi}}(x_{n}^{(\ell)})$
        and 
        $g_{\hat{\xi}}(x_{1}^{*(\ell)}), \dots, g_{\hat{\xi}}(x_{m}^{*(\ell)})$
        \STATE Aggregate train embeddings to obtain $\overline{g}_{\hat{\xi}}(x_{1}^{(\ell)},\dots,x_{n}^{(\ell)})$
        \FOR{$1 \leq j \leq m$}
            \STATE 
            Compute \hspace{0.01em} $\phi^{(k,\ell)}_{j} = h(\overline{g}_{\hat{\xi}}(x_{1:n}^{(\ell)}), g_{\hat{\xi}}(x_{j}^{*(\ell)}); \gamma^{(k-1)})$
            \STATE Sample $\epsilon^{(k,\ell)}_j$ and compute
            $\theta^{(k,\ell)}_j = \mu^{(k,\ell)}_j + \Sigma^{(k,\ell)}_j \cdot \epsilon^{(k,\ell)}_j$
            for $(\mu^{(k,\ell)}_j, \Sigma^{(k,\ell)}_j) = \phi^{(k,\ell)}_j$
            \STATE 
            Compute:
            \[
            \begin{array}{ll}
            p(y_{1:n}^{(\ell)} \mid x_{1:n}^{(\ell)}, \theta^{(k,\ell)}_j) 
            = \{f_{\theta^{(k,\ell)}_j}(g_{\hat{\xi}}(x_i^{(\ell)}))\}_{i=1}^n, & 
            p(y_j^{*(\ell)} \mid x_j^{*(\ell)}, \theta^{(k,\ell)}_j) = f_{\theta^{(k,\ell)}_j}(g_{\hat{\xi}}(x_j^{*(\ell)})), \\[0.5em]
            p(\theta^{(k,\ell)}_j \mid x_{1:n}^{(\ell)}, x_j^{*(\ell)}) \text{ (prior; Eq.~\ref{eq:prior})}, & 
            q_{\phi_j^{(k,\ell)}}(\theta^{(k,\ell)}_j \mid x_{1:n}^{(\ell)}, y_{1:n}^{(\ell)}, x_j^{*(\ell)})
            \end{array}
            \]
            \vspace{-1em}
        \ENDFOR
    \ENDFOR
    \STATE Compute $\mathcal{L}^{(k)} = \frac{1} \sum_{\ell=1}^L  \sum_{j=1}^m \mathcal{L}_{\mathcal{D}}(\phi_{1:m}^{(k,\ell)})$ 
    \STATE Update the parameters of $h$ by gradient ascent: $\gamma^{(k)} \leftarrow \gamma^{(k-1)} + \eta \nabla_\gamma \mathcal{L}^{(k)}$
\ENDFOR \\
\end{algorithmic}
\end{algorithm}

\section{Additional experimental results} \label{sup:experiments}

\subsection{Synthetic data}
Full results for the synthetic experiments presented in Figure \ref{fig:simulations}  are provided in Table \ref{tab:simulation_results}.

\begin{table}[ht]
  \caption{Results for synthetic data linear regression and binary classification}
  \label{tab:simulation_results}
  \centering
  \begin{tabular}{llcccc}
    \toprule
    \textbf{Experiment} & \textbf{Metric} & \textbf{\methodname{} (ours)} & \textbf{DUE} & \textbf{SNGP} & \textbf{DUL} \\
    \midrule
    \multirow{1}{*}{\centering \textbf{Linear}} & RMSE & \textbf{0.068} (0.002) & 0.110 (0.008) & 0.488 (0.004) & 0.081 (0.010) \\
    \midrule
    \multirow{2}{*}{\centering \textbf{Logistic}} & Accuracy & \textbf{0.900} (0.009) & 0.880 (0.001) & 0.844 (0.027) & \textbf{0.900} (0.008) \\
     & ACE      & \textbf{0.011} (0.003) & 0.034 (0.007) & 0.099 (0.038) & 0.018 (0.000) \\
    \bottomrule
  \end{tabular}
\end{table}

In Figure \ref{fig:add_linear_simulations} we provide results for the heteroscedastic regression experiment, for additional values of the parameter $a$, controlling how much the test data is shifted with respect to the observed training data, and thus the difficulty of generalization on providing accurate uncertainty measures.

\begin{figure}[ht] 
  \centering
  \includegraphics[width=1.0\columnwidth]{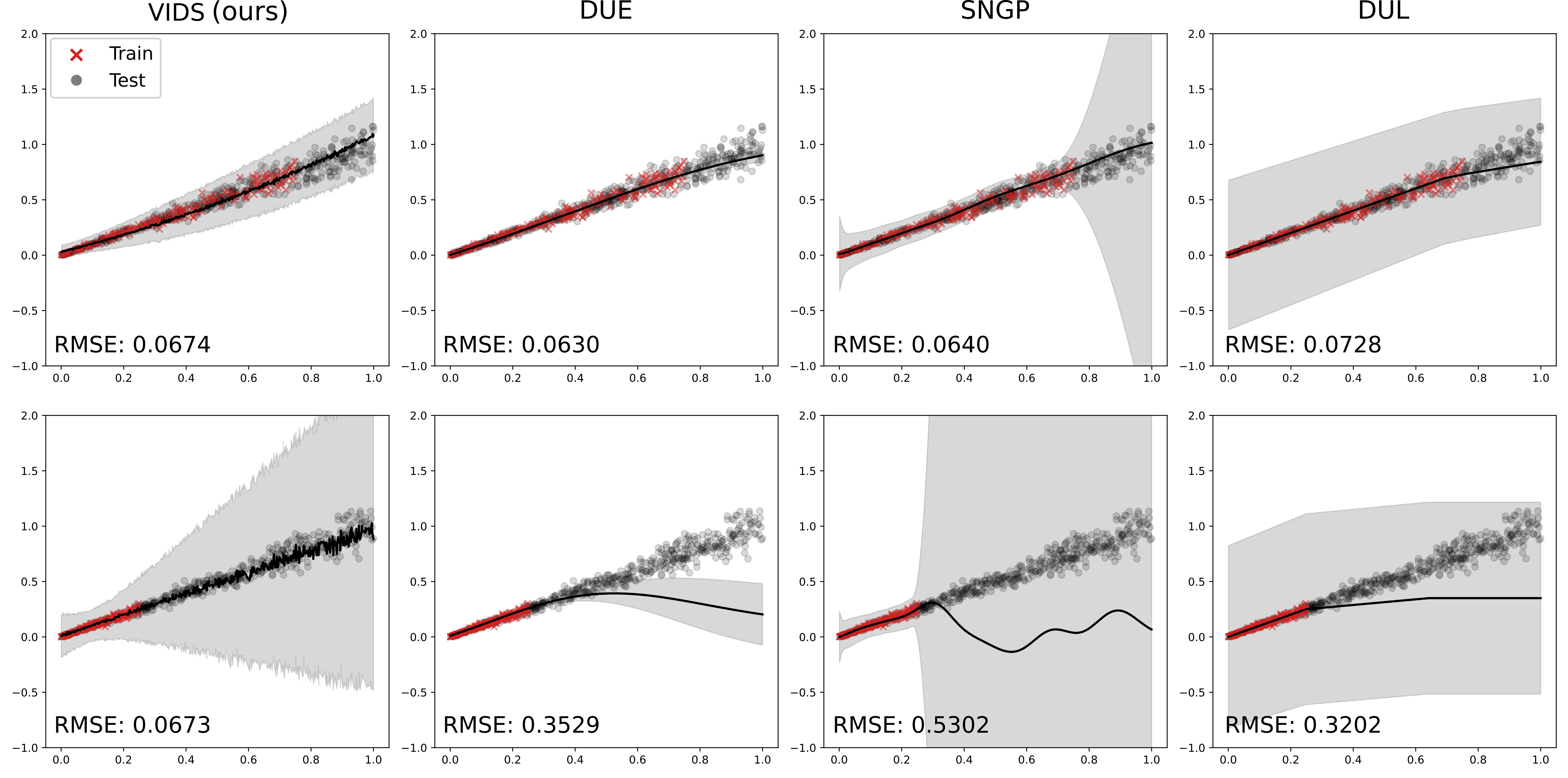}
  \caption{Simulation results for heteroskedastic regression for $a=0.25$ (top) and $a=0.75$ (bottom). Red crosses represent training data, while gray dots indicate test data. The black line depicts the predictions, with the gray shaded area spanning $\pm 1$ standard deviation. A single repetition is depicted; RMSE values are averaged over 10 repetitions.}
  \label{fig:add_linear_simulations}
\end{figure}

\section{Implementation Details} \label{sup:implement}

\subsection{General implementation details}

\paragraph{Code} 
The code to reproduce our results is attached to the submission and upon acceptance a link to a permanent repository will be included in the main text.

All the code in this work was implemented in Python 3.11. We used Numpy 2.0, TensorFlow 2.13 and TensorFlow Addons 0.21 packages. The UCI datasets were loaded through sklearn 1.6. 
CIFAR-C dataset was obtained from Zenodo\footnote{\texttt{https://zenodo.org/record/2535967/files/CIFAR-10-C.tar}}. CIFAR-10 and Celeb-A datasets were loaded through torchvision 0.21.
All figures were generated using Matplotlib 3.10.

For DUE and DUL implementation was adapted from the source code of the original papers\footnote{ \texttt{https://github.com/y0ast/DUE}}\footnote{\texttt{https://github.com/yookoon/density\_uncertainty\_layers}}.
For SNGP implementation was adapted from the implementation provided in the source code of DUE.

We ran all synthetic data and UCI experiments on 2 CPUs. 
Each repetition of these experiments lasted less than 7 minutes. 
For real data classification experiments (on CIFAR-10 and Celeb-A datasets) we used a single A100 cloud GPU.  Each repetition lasted less than 18 minutes.

\paragraph{Hyper-parameters} 

Our setting deals with an unknown covariate shift. Thus,
hyperparameters for all methods were chosen via a grid-search in a single experiment repetition, (excluded from the analysis).
Some of the hyperparameters are method specific.
Our variational method uses environment related parameters: number of environments $J$, size of each test environment $m$, and size of each train environment $n$. Below we refer to the KL penalty $\lambda$ as 'penalty', and to the 
variance penalty as $\tau$. The DUE method employs inducing points to approximate the Gaussian process component. Both DUE and SNGP use GP features,  random Fourier features used for approximating the kernel, and scaling of the input features. In DUL the number of steps corresponds to epochs. The hyperparameters of all methods are specified in the corresponding tables below.

\paragraph{Normalizing factor}
For continuous response variables, our implementation approximates the log of the normalizing factor $z$ using the log of the mean of exponential log-likelihoods: $\log v \approx \log (\frac{1}{n} \sim_i \text{exp}(v_i))$ where $v_i$ are the integrated log likelihoods.

\subsection{Implementation details for synthetic data experiments}
The data-related parameters of the synthetic experiments are described in the main text. The hyper-parameters used for the  heteroskedastic linear regression and logistic regression with missing data are detailed in Table \ref{tbl:synthetic_hyper}. 

For \methodname{} we specify $h_\gamma$ as a fully-connected neural network with 6 layers of sizes $64d, 32d, 16d, 8d, 4d, 2d$ and ReLU activation between the layers, for $d=8$.

\begin{table}[t]
  \caption{Hyperparameters for heteroskedastic linear and logistic regression.}
  \label{tbl:synthetic_hyper}
  \centering
  \begin{tabular}{lcccccccc}
    \toprule
     & \multicolumn{4}{c}{Linear} & \multicolumn{4}{c}{Logistic} \\
    \cmidrule(lr){2-5} \cmidrule(lr){6-9}
    \textbf{Parameter} 
      & \textbf{\methodname{} (ours)} & \textbf{DUE} & \textbf{SNGP} & \textbf{DUL} 
      & \textbf{\methodname{} (ours)} & \textbf{DUE} & \textbf{SNGP} & \textbf{DUL} \\
    \midrule
    \(J\)                & 30    & --    & --    & --    & 30    & --    & --    & -- \\
    \(m\)                & 20    & --    & --    & --    & 20    & --    & --    & -- \\
    \(n\)                & 500   & --    & --    & --    & 500   & --    & --    & -- \\
    \(\tau\)             & 0.001 & --    & --    & --    & 0.001 & --    & --    & -- \\
    Penalty              & 0.005 & --    & 1     & 1     & 0.005 & --    & 1     & 1 \\
    Batch size           & 520   & 100   & 64    & 50    & 520   & 100   & 32    & 50 \\
    Steps                & 30    & 1500  & 7000  & 500   & 30    & 1500  & 1000  & 300 \\
    Learning rate        & \(10^{-2}\) & \(10^{-2}\) & \(10^{-3}\) & \(10^{-3}\) 
                         & \(10^{-3}\) & \(10^{-2}\) & \(10^{-3}\) & \(10^{-2}\) \\
    n\_inducing\_points   & --    & 20    & --    & --    & --    & 20    & --    & -- \\
    GP features          & --    & --    & 128   & --    & --    & --    & 3   & -- \\
    Random features      & --    & --    & 1024  & --    & --    & --    & 128  & -- \\
    Feature scale        & --    & --    & 2     & --    & --    & --    & 2     & -- \\
    \bottomrule
  \end{tabular}
\end{table}

\subsection{Implementation details for real data classification experiments}

For the classification experiments (both for CIFAR and Celeb-A datasets) and all methods, we specify the base model as a convolutional neural network with two convolutional blocks, each with a 3×3 convolution with 32 filters, followed by a ReLU activation and a 2×2 max-pooling. These are followed by a fully connected layer with 64 units, ReLU activation and a final fully connected  layer of dimension $d=16$.

We performed a grid search for hyperparameters for each method on a single repetition of the experiment on an excluded setting. We chose the "Pixelate" corruption for the search on CIFAR, and the Target -- Shift Attribute pair of Male -- Blurry for Celeb-A.
The resulting hyper-parameters are reported in Table \ref{tbl:classification_hyper}.

\paragraph{CIFAR}
For \methodname{}, we specify $h_\gamma$ as a fully connected neural network with 4 layers of sizes layers of sizes $32d \cdot 10, 16d \cdot 10 ,4d \cdot 10, 2d \cdot 10$ and ReLU activations between them, for $d=16$.

\paragraph{Celeb-A}
For \methodname{}, we specify $h_\gamma$ as a fully connected neural network with 5 layers of sizes $32d, 16d, 8d, 4d, 2d$ and ReLU activations between them, for $d=16$.

\begin{table}[h]
  \caption{Hyperparameters for classification experiments.}
  \label{tbl:classification_hyper}
  \centering
  \begin{tabular}{lcccccccc}
    \toprule
     & \multicolumn{4}{c}{CIFAR} & \multicolumn{4}{c}{Celeb-A} \\
    \cmidrule(lr){2-5} \cmidrule(lr){6-9}
    \textbf{Parameter} 
      & \textbf{\methodname{} (ours)} & \textbf{DUE} & \textbf{SNGP} & \textbf{DUL} 
      & \textbf{\methodname{} (ours)} & \textbf{DUE} & \textbf{SNGP} & \textbf{DUL} \\
    \midrule
    \(J\)                & 10    & --    & --    & --    & 10    & --    & --    & -- \\
    \(m\)                & 20    & --    & --    & --    & 20    & --    & --    & -- \\
    \(n\)                & 5000   & --    & --    & --    & 1000   & --    & --    & -- \\
    \(\tau\)             & 0.001 & --    & --    & --    & 0.001 & --    & --    & -- \\
    Penalty              & 0.001 & --    & 1     & 1     & 0.005 & --    & 1     & 1 \\
    Batch size           & 5020   & 100   & 100    & 50    & 1020   & 100   & 100    & 50 \\
    Steps                & 25    & 25,000  & 100,000  & 500   & 50    & 1500  & 10,000  & 500 \\
    Learning rate        & \(10^{-4}\) & \(10^{-4}\) & \(10^{-4}\) & \(10^{-3}\) 
                         & \(10^{-3}\) & \(10^{-2}\) & \(10^{-3}\) & \(10^{-3}\) \\
    n\_inducing\_points   & --    & 100    & --    & --    & --    & 20    & --    & -- \\
    GP features          & --    & --    & 64   & --    & --    & --    & 16   & -- \\
    Random features      & --    & --    & 512  & --    & --    & --    & 64  & -- \\
    Feature scale        & --    & --    & 2     & --    & --    & --    & 2     & -- \\
    \bottomrule
  \end{tabular}
\end{table}

\subsection{Implementation details for real data regression experiments}
We use the standard target variables from the UCI datasets: MEDV for Boston, Compressive Strength for Concrete, and Quality for Wine.

To evaluate model performance under distribution shifts, we split each dataset into two groups by applying the K-Means algorithm with $K=2$
 on all numerical columns, excluding the target variable. We calculate the average Euclidean within-cluster distance for each cluster as the mean distance from each point to its centroid. The cluster with the larger average within-cluster distance is designated as the majority in training.

We then sample data from both clusters uniformly at random to form the training and test sets. The training set contains 90\% of the high-variance cluster, while the test set consists of 90\% of the low-variance cluster. Both the training and test features and target variables are standardized using the median and standard deviation of the training set.

For all experiments we specified the base model as a fully connected neural network  with 4 layers of sizes $8d, 4d, 2d, 4d, 2d$ and ReLU activations between them, for $d=32$.

The hyper-parameters for the experiments are detailed in Table \ref{tbl:uci_params}.

\begin{table}[h]
  \caption{Hyperparameters for UCI regression experiments.}
  \centering \label{tbl:uci_params}
  \begin{tabular}{lcccc} 
    \toprule
    \textbf{Parameter}           & \textbf{\methodname{} (ours)} & \textbf{DUE} & \textbf{SNGP} & \textbf{DUL} \\
    \midrule
    \(J\)                      & 10    & --    & --    & -- \\
    \(m\)                      & 50    & --    & --    & -- \\
    \(n\)                      & 1000   & --    & --    & -- \\
    \(\tau\)                 & 0.001 & --    & --    & -- \\
    Penalty                  & 0.01 & --    & 1     & 1 \\
    Batch size               & 520   & 100   & 64    & 50 \\
    Steps                    & 150    & 1000  & 7000  & 500 \\
    Learning rate            & \(10^{-3}\)  & \(10^{-2}\) & \(10^{-3}\) & \(10^{-3}\) \\
    n\_inducing\_points        & --    & 20    & --    & -- \\
    GP features          & --    & --    & 16   & -- \\
    Random features      & --    & --    & 16  & -- \\
    Feature scale             & --    & --    & 2     & -- \\
    \bottomrule
  \end{tabular}
\end{table}

\subsection{Running times}

Our method differs from others in that it pre-trains a representation and performs variational inference only on the prediction layer. As a result, for smaller models (e.g., linear regression), our approach can be more computationally expensive due to per-environment optimization. However, for larger models, our method has an advantage since competing methods must optimize many more parameters.
In the following table, we specify the hardware and execution times for all methods across our experiments.

\begin{table}[ht]
  \caption{Runtime comparison across methods.}
  \centering
  \label{tbl:runtime_comparison}
  \begin{tabular}{lccrrrr}
    \toprule
    \textbf{Dataset} & \textbf{Hardware} & \textbf{Unit} & \textbf{Vids} & \textbf{DUE} & \textbf{SNGP} & \textbf{DUL} \\
    \midrule
    Linear regression                & 2 CPUs         & seconds  & 67.160 & 39.900 & 54.800 & 16.400 \\
    Logistic regression             & 2 CPUs         & seconds  & 61.250 & 40.300 & 16.500 & 6.290  \\
    Concrete                        & 2 CPUs         & minutes  & 3.237  & 2.070  & 0.700  & 0.400  \\
    Boston                          & 2 CPUs         & minutes  & 3.112  & 2.017  & 0.717  & 0.267  \\
    Wine                            & 2 CPUs         & minutes  & 2.523  & 1.583  & 1.567  & 0.895  \\
    CIFAR-C (max across corruptions) & A100 Cloud GPU & minutes  & 2.516  & 7.447  & 3.527  & 4.040  \\
    Celeb-A (max across tasks)       & A100 Cloud GPU & minutes  & 2.420  & 4.417  & 4.220  & 3.318  \\
    \bottomrule
  \end{tabular}
\end{table}

\clearpage 
\pagebreak
\newpage
\pagebreak
\section*{NeurIPS Paper Checklist}


\begin{enumerate}

\item {\bf Claims}
    \item[] Question: Do the main claims made in the abstract and introduction accurately reflect the paper's contributions and scope?
    \item[] Answer: \answerYes{} 
    \item[] Justification: The main claims and contributions of the paper are listed in the final paragraph of the Introduction.
    \item[] Guidelines:
    \begin{itemize}
        \item The answer NA means that the abstract and introduction do not include the claims made in the paper.
        \item The abstract and/or introduction should clearly state the claims made, including the contributions made in the paper and important assumptions and limitations. A No or NA answer to this question will not be perceived well by the reviewers. 
        \item The claims made should match theoretical and experimental results, and reflect how much the results can be expected to generalize to other settings. 
        \item It is fine to include aspirational goals as motivation as long as it is clear that these goals are not attained by the paper. 
    \end{itemize}

\item {\bf Limitations}
    \item[] Question: Does the paper discuss the limitations of the work performed by the authors?
    \item[] Answer: \answerYes{} 
    \item[] Justification: Limitations and possible future extensions are discussed at the end of the Conclusion section.
    \item[] Guidelines:
    \begin{itemize}
        \item The answer NA means that the paper has no limitation while the answer No means that the paper has limitations, but those are not discussed in the paper. 
        \item The authors are encouraged to create a separate "Limitations" section in their paper.
        \item The paper should point out any strong assumptions and how robust the results are to violations of these assumptions (e.g., independence assumptions, noiseless settings, model well-specification, asymptotic approximations only holding locally). The authors should reflect on how these assumptions might be violated in practice and what the implications would be.
        \item The authors should reflect on the scope of the claims made, e.g., if the approach was only tested on a few datasets or with a few runs. In general, empirical results often depend on implicit assumptions, which should be articulated.
        \item The authors should reflect on the factors that influence the performance of the approach. For example, a facial recognition algorithm may perform poorly when image resolution is low or images are taken in low lighting. Or a speech-to-text system might not be used reliably to provide closed captions for online lectures because it fails to handle technical jargon.
        \item The authors should discuss the computational efficiency of the proposed algorithms and how they scale with dataset size.
        \item If applicable, the authors should discuss possible limitations of their approach to address problems of privacy and fairness.
        \item While the authors might fear that complete honesty about limitations might be used by reviewers as grounds for rejection, a worse outcome might be that reviewers discover limitations that aren't acknowledged in the paper. The authors should use their best judgment and recognize that individual actions in favor of transparency play an important role in developing norms that preserve the integrity of the community. Reviewers will be specifically instructed to not penalize honesty concerning limitations.
    \end{itemize}

\item {\bf Theory Assumptions and Proofs}
    \item[] Question: For each theoretical result, does the paper provide the full set of assumptions and a complete (and correct) proof?
    \item[] Answer: \answerYes{} 
    \item[] Justification:  Modeling assumptions are explicitly stated in Section \ref{sec:assump}, and proofs in Appendix \ref{supp:proof_envs}.
    \item[] Guidelines:
    \begin{itemize}
        \item The answer NA means that the paper does not include theoretical results. 
        \item All the theorems, formulas, and proofs in the paper should be numbered and cross-referenced.
        \item All assumptions should be clearly stated or referenced in the statement of any theorems.
        \item The proofs can either appear in the main paper or the supplemental material, but if they appear in the supplemental material, the authors are encouraged to provide a short proof sketch to provide intuition. 
        \item Inversely, any informal proof provided in the core of the paper should be complemented by formal proofs provided in appendix or supplemental material.
        \item Theorems and Lemmas that the proof relies upon should be properly referenced. 
    \end{itemize}

    \item {\bf Experimental Result Reproducibility}
    \item[] Question: Does the paper fully disclose all the information needed to reproduce the main experimental results of the paper to the extent that it affects the main claims and/or conclusions of the paper (regardless of whether the code and data are provided or not)?
    \item[] Answer: \answerYes{} 
    \item[] Justification: The algorithms are described in Algorithm \ref{alg1} and \ref{alg:synthetic}. Details for all empirical results are provided  in Appendix \ref{sup:implement}.
    \item[] Guidelines:
    \begin{itemize}
        \item The answer NA means that the paper does not include experiments.
        \item If the paper includes experiments, a No answer to this question will not be perceived well by the reviewers: Making the paper reproducible is important, regardless of whether the code and data are provided or not.
        \item If the contribution is a dataset and/or model, the authors should describe the steps taken to make their results reproducible or verifiable. 
        \item Depending on the contribution, reproducibility can be accomplished in various ways. For example, if the contribution is a novel architecture, describing the architecture fully might suffice, or if the contribution is a specific model and empirical evaluation, it may be necessary to either make it possible for others to replicate the model with the same dataset, or provide access to the model. In general. releasing code and data is often one good way to accomplish this, but reproducibility can also be provided via detailed instructions for how to replicate the results, access to a hosted model (e.g., in the case of a large language model), releasing of a model checkpoint, or other means that are appropriate to the research performed.
        \item While NeurIPS does not require releasing code, the conference does require all submissions to provide some reasonable avenue for reproducibility, which may depend on the nature of the contribution. For example
        \begin{enumerate}
            \item If the contribution is primarily a new algorithm, the paper should make it clear how to reproduce that algorithm.
            \item If the contribution is primarily a new model architecture, the paper should describe the architecture clearly and fully.
            \item If the contribution is a new model (e.g., a large language model), then there should either be a way to access this model for reproducing the results or a way to reproduce the model (e.g., with an open-source dataset or instructions for how to construct the dataset).
            \item We recognize that reproducibility may be tricky in some cases, in which case authors are welcome to describe the particular way they provide for reproducibility. In the case of closed-source models, it may be that access to the model is limited in some way (e.g., to registered users), but it should be possible for other researchers to have some path to reproducing or verifying the results.
        \end{enumerate}
    \end{itemize}

\item {\bf Open access to data and code}
    \item[] Question: Does the paper provide open access to the data and code, with sufficient instructions to faithfully reproduce the main experimental results, as described in supplemental material?
    \item[] Answer: \answerYes{} 
    \item[] Justification: Source code to reproduce in experimental results is provided with the submission, and will be made public upon acceptance.
    \item[] Guidelines:
    \begin{itemize}
        \item The answer NA means that paper does not include experiments requiring code.
        \item Please see the NeurIPS code and data submission guidelines (\url{https://nips.cc/public/guides/CodeSubmissionPolicy}) for more details.
        \item While we encourage the release of code and data, we understand that this might not be possible, so “No” is an accep answer. Papers cannot be rejected simply for not including code, unless this is central to the contribution (e.g., for a new open-source benchmark).
        \item The instructions should contain the exact command and environment needed to run to reproduce the results. See the NeurIPS code and data submission guidelines (\url{https://nips.cc/public/guides/CodeSubmissionPolicy}) for more details.
        \item The authors should provide instructions on data access and preparation, including how to access the raw data, preprocessed data, intermediate data, and generated data, etc.
        \item The authors should provide scripts to reproduce all experimental results for the new proposed method and baselines. If only a subset of experiments are reproducible, they should state which ones are omitted from the script and why.
        \item At submission time, to preserve anonymity, the authors should release anonymized versions (if applicable).
        \item Providing as much information as possible in supplemental material (appended to the paper) is recommended, but including URLs to data and code is permitted.
    \end{itemize}

\item {\bf Experimental Setting/Details}
    \item[] Question: Does the paper specify all the training and test details (e.g., data splits, hyperparameters, how they were chosen, type of optimizer, etc.) necessary to understand the results?
    \item[] Answer: \answerYes{} 
    \item[] Justification: All details are specified in Appendix \ref{sup:implement}.
    \item[] Guidelines:
    \begin{itemize}
        \item The answer NA means that the paper does not include experiments.
        \item The experimental setting should be presented in the core of the paper to a level of detail that is necessary to appreciate the results and make sense of them.
        \item The full details can be provided either with the code, in appendix, or as supplemental material.
    \end{itemize}

\item {\bf Experiment Statistical Significance}
    \item[] Question: Does the paper report error bars suitably and correctly defined or other appropriate information about the statistical significance of the experiments?
    \item[] Answer: \answerYes{} 
    \item[] Justification: Mean performance and standard deviation across replicates are reported in tables throughout the study. All relevant plots include error bars.
    \item[] Guidelines:
    \begin{itemize}
        \item The answer NA means that the paper does not include experiments.
        \item The authors should answer "Yes" if the results are accompanied by error bars, confidence intervals, or statistical significance tests, at least for the experiments that support the main claims of the paper.
        \item The factors of variability that the error bars are capturing should be clearly stated (for example, train/test split, initialization, random drawing of some parameter, or overall run with given experimental conditions).
        \item The method for calculating the error bars should be explained (closed form formula, call to a library function, bootstrap, etc.)
        \item The assumptions made should be given (e.g., Normally distributed errors).
        \item It should be clear whether the error bar is the standard deviation or the standard error of the mean.
        \item It is OK to report 1-sigma error bars, but one should state it. The authors should preferably report a 2-sigma error bar than state that they have a 96\% CI, if the hypothesis of Normality of errors is not verified.
        \item For asymmetric distributions, the authors should be careful not to show in s or figures symmetric error bars that would yield results that are out of range (e.g. negative error rates).
        \item If error bars are reported in s or plots, The authors should explain in the text how they were calculated and reference the corresponding figures or s in the text.
    \end{itemize}

\item {\bf Experiments Compute Resources}
    \item[] Question: For each experiment, does the paper provide sufficient information on the computer resources (type of compute workers, memory, time of execution) needed to reproduce the experiments?
    \item[] Answer: \answerYes{} 
    \item[] Justification: Specified in Appendix \ref{sup:implement}.
    \item[] Guidelines:
    \begin{itemize}
        \item The answer NA means that the paper does not include experiments.
        \item The paper should indicate the type of compute workers CPU or GPU, internal cluster, or cloud provider, including relevant memory and storage.
        \item The paper should provide the amount of compute required for each of the individual experimental runs as well as estimate the total compute. 
        \item The paper should disclose whether the full research project required more compute than the experiments reported in the paper (e.g., preliminary or failed experiments that didn't make it into the paper). 
    \end{itemize}
    
\item {\bf Code Of Ethics}
    \item[] Question: Does the research conducted in the paper conform, in every respect, with the NeurIPS Code of Ethics \url{https://neurips.cc/public/EthicsGuidelines}?
    \item[] Answer: \answerYes{} 
    \item[] Justification: The paper conforms with the NeurIPS Code of Ethics.
    \item[] Guidelines:
    \begin{itemize}
        \item The answer NA means that the authors have not reviewed the NeurIPS Code of Ethics.
        \item If the authors answer No, they should explain the special circumstances that require a deviation from the Code of Ethics.
        \item The authors should make sure to preserve anonymity (e.g., if there is a special consideration due to laws or regulations in their jurisdiction).
    \end{itemize}

\item {\bf Broader Impacts}
    \item[] Question: Does the paper discuss both potential positive societal impacts and negative societal impacts of the work performed?
    \item[] Answer: \answerNA{} 
    \item[] Justification: This work is not related to any particular application that might have societal impacts. 
    \item[] Guidelines:
    \begin{itemize}
        \item The answer NA means that there is no societal impact of the work performed.
        \item If the authors answer NA or No, they should explain why their work has no societal impact or why the paper does not address societal impact.
        \item Examples of negative societal impacts include potential malicious or unintended uses (e.g., disinformation, generating fake profiles, surveillance), fairness considerations (e.g., deployment of technologies that could make decisions that unfairly impact specific groups), privacy considerations, and security considerations.
        \item The conference expects that many papers will be foundational research and not tied to particular applications, let alone deployments. However, if there is a direct path to any negative applications, the authors should point it out. For example, it is legitimate to point out that an improvement in the quality of generative models could be used to generate deepfakes for disinformation. On the other hand, it is not needed to point out that a generic algorithm for optimizing neural networks could enable people to train models that generate Deepfakes faster.
        \item The authors should consider possible harms that could arise when the technology is being used as intended and functioning correctly, harms that could arise when the technology is being used as intended but gives incorrect results, and harms following from (intentional or unintentional) misuse of the technology.
        \item If there are negative societal impacts, the authors could also discuss possible mitigation strategies (e.g., gated release of models, providing defenses in addition to attacks, mechanisms for monitoring misuse, mechanisms to monitor how a system learns from feedback over time, improving the efficiency and accessibility of ML).
    \end{itemize}
    
\item {\bf Safeguards}
    \item[] Question: Does the paper describe safeguards that have been put in place for responsible release of data or models that have a high risk for misuse (e.g., pretrained language models, image generators, or scraped datasets)?
    \item[] Answer: \answerNA{} 
    \item[] Justification: Not applicable for the current study.
    \item[] Guidelines:
    \begin{itemize}
        \item The answer NA means that the paper poses no such risks.
        \item Released models that have a high risk for misuse or dual-use should be released with necessary safeguards to allow for controlled use of the model, for example by requiring that users adhere to usage guidelines or restrictions to access the model or implementing safety filters. 
        \item Datasets that have been scraped from the Internet could pose safety risks. The authors should describe how they avoided releasing unsafe images.
        \item We recognize that providing effective safeguards is challenging, and many papers do not require this, but we encourage authors to take this into account and make a best faith effort.
    \end{itemize}

\item {\bf Licenses for existing assets}
    \item[] Question: Are the creators or original owners of assets (e.g., code, data, models), used in the paper, properly credited and are the license and terms of use explicitly mentioned and properly respected?
    \item[] Answer: \answerYes{} 
    \item[] Justification: Datasets and previous approaches discussed in the paper have all been properly attributed.
    \item[] Guidelines:
    \begin{itemize}
        \item The answer NA means that the paper does not use existing assets.
        \item The authors should cite the original paper that produced the code package or dataset.
        \item The authors should state which version of the asset is used and, if possible, include a URL.
        \item The name of the license (e.g., CC-BY 4.0) should be included for each asset.
        \item For scraped data from a particular source (e.g., website), the copyright and terms of service of that source should be provided.
        \item If assets are released, the license, copyright information, and terms of use in the package should be provided. For popular datasets, \url{paperswithcode.com/datasets} has curated licenses for some datasets. Their licensing guide can help determine the license of a dataset.
        \item For existing datasets that are re-packaged, both the original license and the license of the derived asset (if it has changed) should be provided.
        \item If this information is not available online, the authors are encouraged to reach out to the asset's creators.
    \end{itemize}

\item {\bf New Assets}
    \item[] Question: Are new assets introduced in the paper well documented and is the documentation provided alongside the assets?
    \item[] Answer:\answerYes{} 
    \item[] Justification: We provide documented source code.
    \item[] Guidelines:
    \begin{itemize}
        \item The answer NA means that the paper does not release new assets.
        \item Researchers should communicate the details of the dataset/code/model as part of their submissions via structured templates. This includes details about training, license, limitations, etc. 
        \item The paper should discuss whether and how consent was obtained from people whose asset is used.
        \item At submission time, remember to anonymize your assets (if applicable). You can either create an anonymized URL or include an anonymized zip file.
    \end{itemize}

\item {\bf Crowdsourcing and Research with Human Subjects}
    \item[] Question: For crowdsourcing experiments and research with human subjects, does the paper include the full text of instructions given to participants and screenshots, if applicable, as well as details about compensation (if any)? 
    \item[] Answer: \answerNA{} 
    \item[] Justification: The study does not involve crowdsourcing or research with human subjects.
    \item[] Guidelines:
    \begin{itemize}
        \item The answer NA means that the paper does not involve crowdsourcing nor research with human subjects.
        \item Including this information in the supplemental material is fine, but if the main contribution of the paper involves human subjects, then as much detail as possible should be included in the main paper. 
        \item According to the NeurIPS Code of Ethics, workers involved in data collection, curation, or other labor should be paid at least the minimum wage in the country of the data collector. 
    \end{itemize}

\item {\bf Institutional Review Board (IRB) Approvals or Equivalent for Research with Human Subjects}
    \item[] Question: Does the paper describe potential risks incurred by study participants, whether such risks were disclosed to the subjects, and whether Institutional Review Board (IRB) approvals (or an equivalent approval/review based on the requirements of your country or institution) were obtained?
    \item[] Answer: \answerNA{} 
    \item[] Justification: The study does not involve human subjects.
    \item[] Guidelines:
    \begin{itemize}
        \item The answer NA means that the paper does not involve crowdsourcing nor research with human subjects.
        \item Depending on the country in which research is conducted, IRB approval (or equivalent) may be required for any human subjects research. If you obtained IRB approval, you should clearly state this in the paper. 
        \item We recognize that the procedures for this may vary significantly between institutions and locations, and we expect authors to adhere to the NeurIPS Code of Ethics and the guidelines for their institution. 
        \item For initial submissions, do not include any information that would break anonymity (if applicable), such as the institution conducting the review.
    \end{itemize}

\end{enumerate}

\end{document}